\theoremstyle{plain}
\newtheorem{theorem}{Theorem}
\newtheorem{proposition}[theorem]{Proposition}
\newtheorem{lemma}[theorem]{Lemma}
\theoremstyle{definition}
\theoremstyle{remark}
\newtheorem{remark}[theorem]{Remark}
\newcommand{\norm}[1]{\left\lVert#1\right\rVert}
\newcommand{\bbP}{\mathbb{P}}
\newcommand{\bbR}{\mathbb{R}}
\newcommand{\calL}{\mathcal{L}}
\newcommand{\calN}{\mathcal{N}}
\newcommand{\st}{\textup{s.t.}}
\newcommand{\Tr}{\operatorname{Tr}}
\newif\ifmynotes
\begin{document}

\title{A Probabilistic Model for Non-Contrastive Learning}

\author{Maximilian Fleissner, Pascal Esser, Debarghya Ghoshdastidar
\thanks{
This work has been supported by the German Research Foundation (Priority Program SPP 2298, project GH 257/2-1) and the DAAD programme Konrad Zuse Schools of Excellence in Artificial Intelligence, sponsored by the Federal Ministry of Education and Research.}
\thanks{All authors are with the Technical University of Munich, Germany (e-mail:  \{fleissner,esser,ghoshdas\}@cit.tum.de)}
}

\markboth{Journal of \LaTeX\ Class Files, Vol. 14, No. 8, August 2015}
{Shell \MakeLowercase{\textit{et al.}}: Bare Demo of IEEEtran.cls for IEEE Journals}
\maketitle

\begin{abstract}
  Self-supervised learning (SSL) aims to find meaningful representations from unlabeled data by encoding semantic similarities through data augmentations. Despite its current popularity, theoretical insights about SSL are still scarce. For example, it is not yet known how commonly used SSL loss functions can be related to a statistical model, much in the same as OLS, generalized linear models or PCA naturally emerge as maximum likelihood estimates of an underlying generative process. In this short paper, we consider a latent variable statistical model for SSL that exhibits an interesting property: Depending on the informativeness of the data augmentations, the MLE of the model either reduces to PCA, or approaches a simple non-contrastive loss. We analyze the model and also empirically illustrate our findings.
\end{abstract}

\begin{IEEEkeywords}
Self-supervised learning, Principal component analysis
\end{IEEEkeywords}

\IEEEpeerreviewmaketitle

\section{Introduction}

The goal of self-supervised learning (SSL) is to learn meaningful representations of data by leveraging knowledge of semantic similarities within the data. As an example from the vision domain, consider two rotated versions $x,x^+$ of the same image. Intuitively, this rotation does not change the semantic content of the image. The goal of SSL is therefore to learn an embedding function $f$ that maps all pairs $x,x^+$ close together. 
SSL has its roots in the work of \citet{bromley1993signature} and recent deep learning based SSL shows great empirical success in computer vision, video data, natural language tasks and speech \citep{misra2020self,fernando2017self,chen2020simple,Steffen2019arxiv}. 
While the goal of SSL is ultimately to learn a ``good representation'' $f$, this in itself is an ill-defined objective unless one explicitly takes a downstream task such as clustering or classification into consideration \citep{bengio2013representation}. However, downstream performance alone is an unsatisfactory measure from a statistical perspective. It tells us very little about the implicit assumptions SSL makes about the data it observes, may not allow us to incorporate uncertainty or prior knowledge, and does not illuminate what distinguishes SSL from traditional representation learning approaches. One way to deal with these issues is by specifying a \emph{generative model} from which data is sampled, and to maximize its likelihood. In the supervised as well as some unsupervised learning settings, such generative models are long since known and provide a probabilistic justification for popular objective functions. Let us give ab brief recap. 

\textbf{Supervised learning.} 
Assume we are given $n$ data points $x_1,\dots,x_n\in\bbR^d$ and corresponding labels $y_1,\dots,y_n\in\bbR$. The generative model is defines by a linear transformation $w$ from data to labels under additive noise $\varepsilon$. 
Under this generative model optimizing the maximum likelihood estimation (MLE), $\max_w\bbP(y_1,x_1,\dots,y_n,x_n|w)$ is equivalent to optimizing the \emph{ordinary least squared loss}, which has  a closed form solution and is commonly used to solve regression problems \citep{Bishop}. Formally, under noise $\varepsilon_i\sim\calN(0,\sigma^2)$,
\begin{align*}
    \underset{\text{Probabilistic Data Model}}{y_i = w^\top x_i+\varepsilon_i}\quad \underset{\text{MLE}}{\longrightarrow} \quad 
    \underset{\text{Loss Function}}{\frac{1}{n}\sum(x^\top _i w-y_i)^2},
\end{align*}
connects the generative process to the OLS objective.

\textbf{Unsupervised learning.}
It is less straightforward to define an appropriate generative model for unsupervised learning. For \emph{PCA (Principal Component Analysis)}, \citet{tipping1999probabilistic} and \citet{lawrence2005probabilistic} consider a generative model that assumes an unobserved, $k$-dimensional latent variable $z \sim \calN(0,I_k)$, with data $x$ then being sampled according to
\begin{align*}
    x | z \sim \calN(Wx + \mu, \sigma^2 I)
\end{align*}
where $\mu \in \bbR^d$ is the mean and $\sigma^2>0$. They show that maximum likelihood estimation of $W$ (marginalized over the latent variables $z$) essentially leads to classic PCA. For other unsupervised learning approaches such as \emph{ICA (Independent Component Analysis)} or \emph{Projection Pursuit}, there is typically no unique way of defining a generative process.

\textbf{Self-Supervised learning.} We consider a \emph{non-contrastive learning} setup, where for every data-point $x_i$ a positive sample $x_i^+$ is created using a semantically meaningful augmentation. The goal of SSL is to learn an embedding function $f$ that maps all pairs $x_i,x_i^+$ close together. Several loss functions for non-contrastive learning exist \citep{zbontar2021barlow, bardes2021vicreg}. Let us focus on a simple non-contrastive loss, optimized over a linear function class. Given pairs $(x_i, x_i^+)_{i=1}^n$ of points, one minimizes
\begin{align}\label{eq:simple_loss}
    \begin{split}
        &\mathcal{L}(W) = \frac{1}{n} \sum_{i=1}^n \left\| f(x_i) - f(x_i^+)\right\|^2 \\
        &\text{where } f(x) = W^\top x \text{ and } W \text{ has orthonormal columns}
    \end{split}
\end{align}
Note that the orthonormality condition on $W$ prevents the model from learning a trivial embedding, a phenomenon known as dimension collapse. Despite the simplicity of this loss, even here no generative model is known to yield the same objective. This raises the following fundamental question:
\begin{quote}
    \emph{Are there natural statistical model whose MLE recovers commonly used SSL loss functions? What distinguishes such models from unsupervised learning?}
\end{quote}

In this paper, we show that there indeed exists a generative model whose MLE is the maximizer of the simple non-contrastive loss \eqref{eq:simple_loss}. Our model also captures the intuition that self supervision can only surpass PCA if its augmentations carry information about the true underlying signal.

\section{A probabilistic model for self-supervision}\label{sec:model_def}

Following established models for probabilistic PCA \citep{tipping1999probabilistic, lawrence2005probabilistic}, we assume data $x$ is generated as an unknown linear transformation $W$ of a latent Gaussian $z$, under additive noise given by a covariance matrix $A$. From there, we add self-supervision to the mix: The augmentation $x^+$ is assumed to be a Gaussian random variable \emph{conditioned on $x$}, whose mean is the underlying example $x$ itself. Formally,
\begin{align}\label{eq:model_def}
\begin{split}
    &z \sim \mathcal{N}(0,I) \\
    &x|z \sim \mathcal{N}(Wz,A) \\
    &x_+ | x \sim \mathcal{N}(x,B)
\end{split}
\end{align}
Here, $B$ is some covariance matrix and $W \in \bbR^{d \times k}$ is an unknown matrix with orthonormal columns. Albeit mathematically simple, the model specified above reflects the intuition that the positive example $x^+$ should be a noisy version of its anchor $x$. Furthermore, it captures the idea that it depends on the interplay between noise and augmentations to what extent SSL or PCA help in recovering the underlying signal $W$. Let us illustrate this with two examples.

\begin{enumerate}
    \item Suppose $A = \sigma^2 I_d$ and $B = \epsilon^2 I_d$ for some $\sigma^2, \epsilon^2>0$. In that case, augmentations add nothing but isotropic noise to the data. This reduces the MLE of $W$ to the PCA solution, and we refer to it as the \emph{isotropic noise model}.
    \item In contrast, suppose $B= I_d - WW^\top$. Then, the positive pairs $x^+$ will lie orthogonal to the subspace spanned by $W$, see Figure \ref{fig:illustration}. In other words, augmentations will leave all signal information intact, and only add noise orthogonal to it. We therefore refer to such $B$ as the \emph{orthogonal noise model}. Whenever $A$ perturbs the spectrum of $W W^\top$, the MLE is no longer given by the PCA solution.
\end{enumerate}

\begin{figure}[t]
    \centering
    \includegraphics[width=\linewidth]{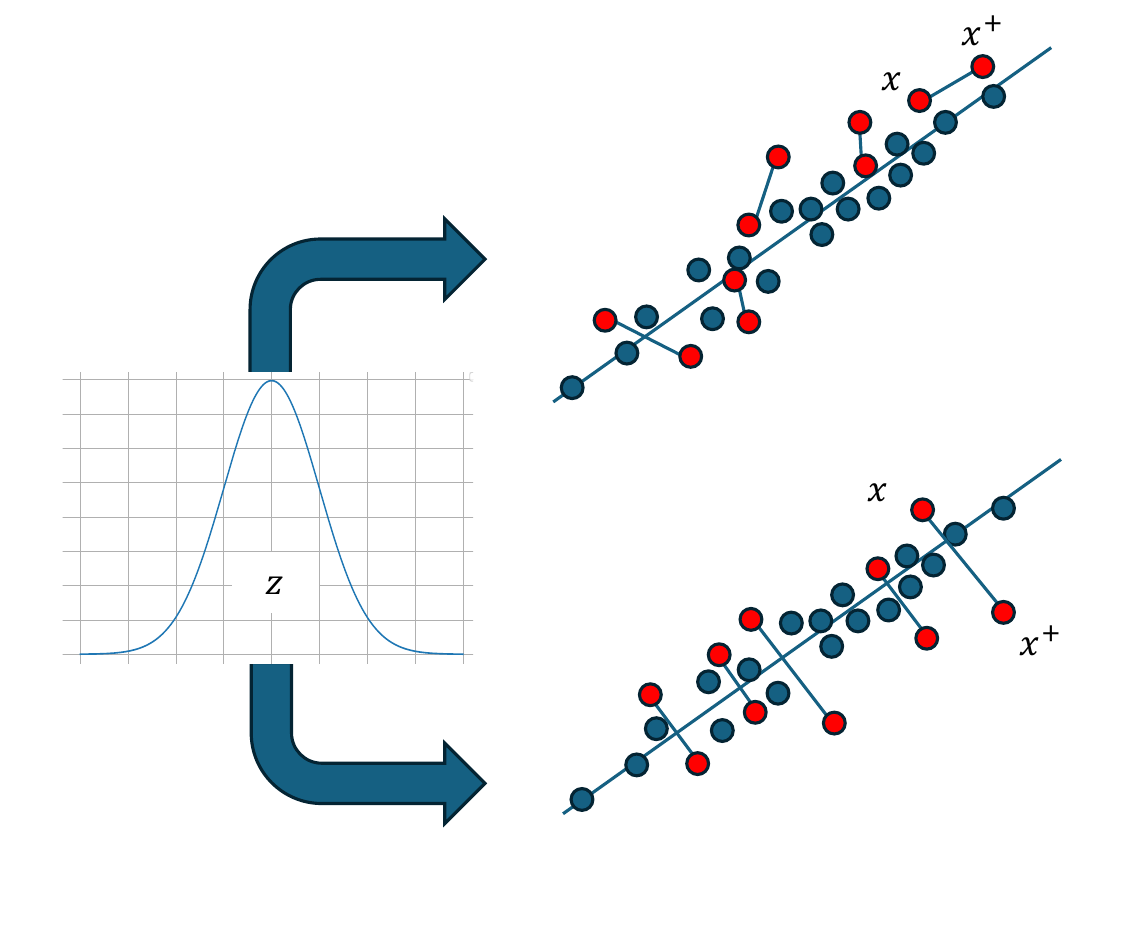}
    \caption{Model Illustration. Data $x$ is generated from a latent Gaussian, which is mapped linearly to a higher-dimensional space. Then, positive samples $x^+$ are generated as Gaussians conditioned on $x$. Depending on the covariance structure, we either recover an isotropic noise model (above) or the orthogonal noise model (below). In the latter, the positive pairs lie in a subspace orthogonal to the underlying signal direction.}
    \label{fig:illustration}
\end{figure}

\section{Maximum Likelihood Estimates}\label{sec:theory}

To derive the MLE, we express the log-likelihood in terms of $W$, $A$, $B$, and the data.

\begin{lemma}\label{lemma:likelihood}
    Suppose we draw $n$ positive pairs i.i.d. from the generative process \eqref{eq:model_def}, in the sense that $n$ independent ground truth samples $z_i$ are generated. Define the matrix
    \begin{align*}
        \Sigma =
        \begin{pmatrix}
        B & 0 \\
        0 & W W^\top + A
        \end{pmatrix} \in \bbR^{2d \times 2d}
    \end{align*}
    Then, the log-likelihood $L(W)$ of observing $\{(x_i,x_i^+)\}_{i=1}^n$ is maximized by minimizing
    \begin{align*}
        L(W) = \log \det (\Sigma) + \Tr \left( \Sigma^{-1} S \right)
    \end{align*}
    where we let
    \begin{align*}
        S = \sum_{i=1}^n 
        \begin{pmatrix} 
        &(x_i - x_i^+ ) (x_i - x_i^+)^\top & 0 \\ 
        &0  &x_i x_i^\top
        \end{pmatrix}
    \end{align*}
\end{lemma}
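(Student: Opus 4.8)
The plan is to reduce the joint density of each observed pair $(x_i,x_i^+)$ to a centered Gaussian with a block-diagonal covariance, via a linear reparametrization that decorrelates the anchor from its augmentation. First I would marginalize out the latent $z$: since $z\sim\calN(0,I)$ and $x\mid z\sim\calN(Wz,A)$, the marginal law of the anchor is $x\sim\calN(0,WW^\top+A)$ by standard Gaussian marginalization. Writing $x=Wz+\eta$ with $\eta\sim\calN(0,A)$ and $x^+=x+\xi$ with $\xi\sim\calN(0,B)$, where $z,\eta,\xi$ are mutually independent, the pair $(x,x^+)$ is jointly Gaussian but with a non-trivial cross-covariance $WW^\top+A$ between its two blocks; consequently $\Sigma$ does not appear in these coordinates.

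The key step is the change of variables $(x,x^+)\mapsto(x-x^+,\,x)$. Under this linear map one has $x-x^+=-\xi\sim\calN(0,B)$ and $x\sim\calN(0,WW^\top+A)$, and because $\xi$ is independent of $(z,\eta)$ the two new coordinates have zero cross-covariance; being jointly Gaussian, they are therefore independent, with joint covariance exactly $\Sigma$. The transformation matrix $\left(\begin{smallmatrix} I & -I \\ I & 0\end{smallmatrix}\right)$ has determinant $\pm1$, so its Jacobian contributes only a constant (independent of $W$) to the log-likelihood and may be dropped for the optimization.

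With this in hand, the per-sample log-density is that of $\calN(0,\Sigma)$ evaluated at $w_i:=(x_i-x_i^+,\,x_i)$, namely $-\tfrac12\log\det\Sigma-\tfrac12\,w_i^\top\Sigma^{-1}w_i$ up to additive constants. Summing over the $n$ i.i.d.\ pairs and discarding constants, maximizing the log-likelihood is equivalent to minimizing $n\log\det\Sigma+\sum_i w_i^\top\Sigma^{-1}w_i$. I would then rewrite each quadratic form as a trace, $w_i^\top\Sigma^{-1}w_i=\Tr(\Sigma^{-1}w_iw_i^\top)$, and sum. Here is the one point needing care: $w_iw_i^\top$ is \emph{not} block diagonal, since it carries off-diagonal blocks $(x_i-x_i^+)x_i^\top$. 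However, because $\Sigma^{-1}$ is block diagonal, those off-diagonal blocks never touch the diagonal of the product and hence do not enter the trace, so $\Tr\!\big(\Sigma^{-1}\sum_i w_iw_i^\top\big)=\Tr(\Sigma^{-1}S)$ with $S$ exactly the block-diagonal matrix in the statement. (The only remaining bookkeeping is the normalization constant: the sum form of $S$ pairs with $n\log\det\Sigma$, so matching the stated $L$ amounts to rescaling by $1/n$.)

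The main obstacle — really the only non-mechanical idea — is recognizing the decorrelating reparametrization $(x,x^+)\mapsto(x-x^+,\,x)$; everything else is Gaussian marginalization and the trace identity. Once the correlation between anchor and augmentation is removed, the block-diagonal form of $\Sigma$, the clean $\log\det$ term, and the survival of only the diagonal sample blocks in $S$ all follow at once.
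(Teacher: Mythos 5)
Your proposal is correct and follows essentially the same route as the paper: the paper likewise sets $\Delta_i = x_i - x_i^+$, observes that $\Delta_i \sim \calN(0,B)$ independently of $x_i$, marginalizes $z_i$ to get $x_i \sim \calN(0, WW^\top + A)$, and assembles the block-diagonal Gaussian with covariance $\Sigma$. Your extra remarks on the unit Jacobian and on the fact that the off-diagonal blocks of $w_i w_i^\top$ do not contribute to $\Tr(\Sigma^{-1}\sum_i w_i w_i^\top)$ (as well as the $n\log\det\Sigma$ versus $\log\det\Sigma$ normalization mismatch in the stated $L$) are points the paper glosses over, and you handle them correctly.
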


\begin{proof}
Write $\calN(u; \mu, s^2)$ for the density of a Gaussian $\calN(t,s^2)$ evaluated at $u$. For all $i \in [n]$, define $\Delta_i = x_i - x_i^+$, and observe that $\Delta_i | x_i \sim \calN(0, B)$. Moreover, $\Delta_i$ is independent of $x_i$. For all pairs $(\Delta_i, x_i)$, we obtain
\begin{align*}
    \begin{split}
        &\bbP(\Delta_i, x_i | W) \\
        &= \bbP(\Delta_i | x_i, W) \bbP(x_i|W) \\
        &= \bbP(\Delta_i | x_i, W) \int_{\bbR^k} \bbP(x_i | W, z_i) \; d \bbP(z_i) \\
        &= \calN(\Delta_i; 0, B) \int_{\bbR^k} \calN(x_i; W z_i, A) \cdot \calN(z_i; 0, I_k) \; dz_i \\
        &= \calN(\Delta_i; 0, B) \cdot \calN(x_i; 0, W W^\top + A)
    \end{split}
\end{align*}
Thus, we see that each pair $(\Delta_i, x_i)$ follows a $2d$-variate Gaussian with zero mean and block covariance
\begin{align*}
    \Sigma = 
    \begin{pmatrix}
    B & 0 \\
    0 & W W^\top + A
    \end{pmatrix}
\end{align*}
By independence of all $z_i$,
\begin{align*}
    \begin{split}
        &\log \bbP \left( \{ \Delta_i, x_i \}_{i=1}^n | W \right) = \sum_{i=1}^n \log \bbP( \Delta_i, x_i | W ) \\
        &= - \frac{1}{2} \sum_{i=1}^n k \log(2 \pi) + \log \det ( \Sigma) \\
        &\quad+ \Tr \left( B^{-1} \Delta_i \Delta_i^\top + \left( W W^\top + A \right)^{-1} x_i x_i^\top \right).
        \end{split}
\end{align*}
Maximizing this expression is equivalent to minimizing
\begin{align*}
    L(W) = \log \det (\Sigma) + \Tr \left( \Sigma^{-1} S \right)
\end{align*}
as claimed.
\end{proof}

When $W$ is orthonormal, it depends on $A,B$ whether the model results in PCA or the simple non-contrastive loss.

\begin{proposition}\label{prop:PCA}
    Consider the generative process with $A = \sigma^2 I_d$ and $B = \epsilon^2 I_d$. In that case, the MLE of $W$ is given by the top $k$ eigenvectors of the matrix $S_x = \sum_{i=1}^n x_i x_i^\top$.
\end{proposition}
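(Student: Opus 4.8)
The plan is to specialize the objective $L(W)$ from Lemma~\ref{lemma:likelihood} to the choices $A = \sigma^2 I_d$ and $B = \epsilon^2 I_d$, and to show it reduces to a trace-maximization problem solved by the top eigenvectors of $S_x$. First I would exploit the block-diagonal structure of $\Sigma$: since
\begin{align*}
    \Sigma = \begin{pmatrix} \epsilon^2 I_d & 0 \\ 0 & WW^\top + \sigma^2 I_d \end{pmatrix},
\end{align*}
both $\log\det(\Sigma)$ and $\Tr(\Sigma^{-1} S)$ split into a $B$-block and a $(WW^\top + A)$-block. The $B$-block contributes $d\log(\epsilon^2)$ and $\epsilon^{-2}\sum_i \Tr(\Delta_i \Delta_i^\top)$, both constant in $W$ and hence discardable. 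This leaves the effective objective $\log\det(WW^\top + \sigma^2 I_d) + \Tr\big((WW^\top + \sigma^2 I_d)^{-1} S_x\big)$, which is exactly the probabilistic PCA likelihood.

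The key simplification comes from the orthonormality constraint $W^\top W = I_k$. I would write $P = WW^\top$ for the orthogonal projection onto the column space of $W$ and $P^\perp = I_d - P$ for its complement, so that $WW^\top + \sigma^2 I_d = (1+\sigma^2)P + \sigma^2 P^\perp$. Because $P$ has exactly $k$ unit eigenvalues and $d-k$ zero eigenvalues regardless of which subspace it projects onto, the log-determinant term equals $k\log(1+\sigma^2) + (d-k)\log(\sigma^2)$, a constant independent of $W$. Thus the entire $W$-dependence is carried by the trace term alone.

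Next I would invert the covariance blockwise, $(WW^\top + \sigma^2 I_d)^{-1} = \tfrac{1}{1+\sigma^2}P + \tfrac{1}{\sigma^2}P^\perp$, and substitute to obtain
\begin{align*}
    \Tr\big((WW^\top + \sigma^2 I_d)^{-1} S_x\big) = \tfrac{1}{\sigma^2}\Tr(S_x) - \tfrac{1}{\sigma^2(1+\sigma^2)}\Tr(W^\top S_x W),
\end{align*}
using $P^\perp = I_d - P$, the identity $\Tr(P S_x) = \Tr(W^\top S_x W)$, and cyclicity of the trace. Since $\tfrac{1}{\sigma^2(1+\sigma^2)} > 0$ and $\Tr(S_x)$ is constant, minimizing the likelihood is equivalent to maximizing $\Tr(W^\top S_x W)$ over orthonormal $W$.

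Finally, the maximization of $\Tr(W^\top S_x W)$ subject to $W^\top W = I_k$ is the classical Ky Fan (Rayleigh--Ritz) problem, whose optimum is attained when the columns of $W$ span the subspace of the top $k$ eigenvectors of $S_x$. I expect the only real subtlety to be the observation that the orthonormality constraint forces the log-determinant to be constant; this is precisely what removes the ``variance-stretching'' factor present in the unconstrained PPCA estimator and collapses the solution to pure PCA. The remaining manipulations are routine linear algebra.
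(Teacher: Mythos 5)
Your proposal is correct and follows essentially the same route as the paper: discard the $B$-block and the log-determinant (constant because orthonormality of $W$ fixes the spectrum of $WW^\top + \sigma^2 I_d$), compute $(WW^\top + \sigma^2 I_d)^{-1} = \sigma^{-2}\left(I_d - \tfrac{1}{1+\sigma^2}WW^\top\right)$, and reduce to the Ky Fan maximization of $\Tr(W^\top S_x W)$. The only cosmetic difference is that you obtain the inverse via the projection decomposition $(1+\sigma^2)P + \sigma^2 P^\perp$ where the paper invokes the Woodbury formula; the resulting expression and argument are identical.
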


\begin{proof}
    When $B$ is independent of $W$, the optimization problem from Lemma \ref{lemma:likelihood} reduces to minimizing
    \begin{align*}
        L(W) = \log \det(A) + \Tr( \Sigma^{-1} S)
    \end{align*}
    where we used the fact that $\log \det(\Sigma) = \log \det(A) + \log \det( \epsilon^2 I_d)$. Since $W$ has orthonormal columns, all eigenvalues of $W W^\top + A = W W^\top + \sigma^2 I_d$ are equal to $\sigma^2$ or $1+\sigma^2$, and we can also drop the log determinant from the minimization. The Woodbury formula and $W^\top W = I_k$ yield that
    \begin{align*} \begin{split}
        \Sigma^{-1} 
        &= 
        \begin{pmatrix} 
            \epsilon^{-2} I_d & 0 \\
            0 & \left( WW^\top + \sigma^2 I_d \right)^{-1} 
        \end{pmatrix} \\
        &=
        \begin{pmatrix} 
            \epsilon^{-2} I_d & 0 \\
            0 & \sigma^{-2} \left( I_d - \frac{1}{1+\sigma^2} W W^\top \right)
        \end{pmatrix}
    \end{split} \end{align*}
    Only the lower block depends on $W$, and so we must simply minimize
    \begin{align*}
        \sigma^{-2} \Tr \left( S_x \left( I_d - \frac{1}{1+\sigma^2} W W^\top \right) \right)
    \end{align*}
    where $S_x = \sum_{i=1}^n x_i x_i^\top $. For any fixed noise level $\sigma^2>0$, this is the same as minimizing
    \begin{align*}
       - \Tr \left( W^\top S_x W \right) 
    \end{align*}
    over $W^\top W = I_k$, which is achieved when $W$ corresponds to the top $k$ eigenvectors of the matrix $S_x$.
\end{proof}

This captures our intuition that, unless augmentations contain information on the signal $W$, SSL is no different from classic PCA. On the other hand, one can show the following.
    
\begin{proposition}\label{theo:ortho}
    Consider the generative process with $A = \rho I_d - W W^\top$ and $B = \gamma I_d - W W^\top$ for some $\rho, \gamma >1$. In that case, the MLE of $W$ is given by the top $k$ eigenvectors of the matrix $S_\Delta = -\sum_{i=1}^n (x_i - x_i^+) (x_i - x_i^+)^\top$, which is equivalent to minimizing the loss \eqref{eq:simple_loss}.
\end{proposition}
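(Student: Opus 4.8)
The plan is to follow the same template as the proof of Proposition~\ref{prop:PCA}, but to exploit the fact that under the orthogonal noise model the roles of the two diagonal blocks of $\Sigma$ are reversed. First I would substitute the given $A = \rho I_d - WW^\top$ and $B = \gamma I_d - WW^\top$ into the expression from Lemma~\ref{lemma:likelihood}. Because $W$ has orthonormal columns, $WW^\top$ is the orthogonal projection onto the $k$-dimensional column space of $W$, so the lower block $WW^\top + A = \rho I_d$ is now constant in $W$, whereas the upper block $B = \gamma I_d - WW^\top$ now carries all the dependence on $W$. I would first note that the assumption $\rho,\gamma>1$ guarantees that $A$ and $B$ are genuine positive-definite covariance matrices, since their eigenvalues are $\rho-1,\rho$ and $\gamma-1,\gamma$ respectively.

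Next I would dispose of the log-determinant term. Since $WW^\top$ is a rank-$k$ projection, $\gamma I_d - WW^\top$ has eigenvalue $\gamma-1$ with multiplicity $k$ and $\gamma$ with multiplicity $d-k$, independent of the particular $W$. Hence $\log\det(\Sigma) = \log\det(\gamma I_d - WW^\top) + \log\det(\rho I_d) = k\log(\gamma-1) + (d-k)\log\gamma + d\log\rho$ is a constant that can be dropped from the minimization. Likewise, the lower block contributes only $\rho^{-1}\Tr(S_x)$ to $\Tr(\Sigma^{-1}S)$, where $S_x = \sum_{i=1}^n x_i x_i^\top$, and this term does not involve $W$ either.

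The key computation is to invert the upper block. Using $(WW^\top)^2 = WW^\top$, I would verify the ansatz $(\gamma I_d - WW^\top)^{-1} = \gamma^{-1} I_d + \tfrac{1}{\gamma(\gamma-1)} WW^\top$. Substituting this into the remaining trace term $\Tr\bigl((\gamma I_d - WW^\top)^{-1} S_\Delta'\bigr)$, where $S_\Delta' = \sum_{i=1}^n (x_i - x_i^+)(x_i - x_i^+)^\top = -S_\Delta$, and discarding the $W$-free piece $\gamma^{-1}\Tr(S_\Delta')$, the problem reduces to minimizing $\tfrac{1}{\gamma(\gamma-1)}\Tr(W^\top S_\Delta' W)$ over $W^\top W = I_k$. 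Since $\gamma>1$ makes the prefactor strictly positive, this is the same as minimizing $\Tr(W^\top S_\Delta' W)$.

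Finally I would identify this objective with the stated loss and eigenvector characterization. On one hand, $\Tr(W^\top S_\Delta' W) = \sum_{i=1}^n \|W^\top(x_i - x_i^+)\|^2$ is exactly $n$ times the loss \eqref{eq:simple_loss} with $f(x)=W^\top x$. On the other hand, minimizing $\Tr(W^\top S_\Delta' W)$ over orthonormal $W$ equals maximizing $\Tr(W^\top(-S_\Delta')W) = \Tr(W^\top S_\Delta W)$, which by the standard Ky Fan trace-maximization argument is solved by taking the columns of $W$ to be the top $k$ eigenvectors of $S_\Delta = -S_\Delta'$. The one point that requires genuine care is the bookkeeping of which terms are truly constant in $W$; in particular, the vanishing of the log-determinant relies crucially on the orthonormality constraint making the spectrum of $\gamma I_d - WW^\top$ fixed, and this is precisely what separates the orthogonal noise model from the isotropic case of Proposition~\ref{prop:PCA}.
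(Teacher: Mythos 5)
Your proposal is correct and follows essentially the same route as the paper's own proof: drop the log-determinant using the fixed spectrum of $\gamma I_d - WW^\top$, note the lower block $WW^\top + A = \rho I_d$ is $W$-free, invert the upper block via the projection identity, and reduce to a Ky Fan trace optimization identified with the loss \eqref{eq:simple_loss}. Your sign bookkeeping with $S_\Delta' = -S_\Delta$ is in fact slightly cleaner than the paper's, which reuses the symbol $S_\Delta$ with the opposite sign inside the proof.
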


\begin{proof}
    As in the previous proof, the log determinant of $\Sigma$ stays constant because $W^\top W = I_k$, since this fixes all eigenvalues of $\gamma I_d - W W^\top$ and $\rho I_d - W W^\top$. Thus, we are left with a trace minimization, for which we compute
    \begin{align*} \begin{split}
        &\Sigma^{-1} 
        = 
        \begin{pmatrix} 
            \left( \gamma I_d - W W^\top \right)^{-1} & 0 \\
            0 & \left( \rho I_d - W W^\top + W W^\top  \right)^{-1} 
        \end{pmatrix} \\
        &=
        \begin{pmatrix} 
            \gamma^{-1} \left( I_d - \frac{1}{1-\gamma} W W^\top \right) & 0 \\
            0 & \rho^{-1} I_d
        \end{pmatrix}
    \end{split} \end{align*}
    Defining
    \begin{align*}
        S_\Delta = \sum_{i=1}^n \Delta_i \Delta_i^\top
    \end{align*}
    our minimization problem reduces to maximization of
    \begin{align*}
        \frac{1}{\gamma - \gamma^2} \Tr \left( W^\top S_\Delta W \right) 
    \end{align*}
    over $W^\top W = I_k$. Since $\gamma>1$, the optimal solution is given by the top $k$ eigenvectors of the matrix $-S_{\Delta}$. Equivalently, we can rewrite this as minimization of
    \begin{align*}
        &\sum_{i=1}^n \Tr \left( \left( f(x_i) - f(x_i^+) \right) (\left( f(x_i) - f(x_i^+) \right)^\top \right) \\
        &= \sum_{i=1}^n \left\| f(x_i) - f(x_i^+) \right\|^2
    \end{align*}
\end{proof}

\begin{remark}
    On image datasets, simply adding Gaussian noise often constitutes a useful augmentation. This is no contradiction to our theory: In practice, we would not use a linear function class, but instead a deep neural network or kernel model. In that case, we may write $f(x) = \langle w, \phi(x) \rangle$ for a nonlinearity $\phi$. Now, adding Gaussian noise to input images does \textbf{not} translate to Gaussian noise in the function space, due to the nonlinearity in $\phi$.
\end{remark}

The formulation opens up the possibility to use a simple non-contrastive loss in a Bayesian setting. By incorporating a prior on the matrix $W$, we obtain the posterior over $W$ given observations $(x_i, x_i^+)_{i=1}^n$ as
\begin{align*}
    \bbP(W | (x_i, x_i^+)_{i=1}^n) \propto \bbP(W) \cdot \bbP((x_i, x_i^+)_{i=1}^n | W) 
\end{align*}
Prior knowledge of the signal $W$ can in this way be introduced into the analysis. Moreover, our representations $f(x) = W^\top x$ are now probabilistic, and as such incorporate uncertainty. This uncertainty can then be ``passed on'' to any predictor $g$ learned on top of $f$, for example in downstream tasks.

\section{Numerical Simulation}

We illustrate the theoretical insights numerically and extend them beyond the setting of Model \eqref{eq:model_def} to Gaussian mixtures in the latent space.

\subsection{Theoretical Model}

We start our analysis by considering the model as outlined in \eqref{eq:model_def}, more specifically under (i) $A = \sigma^2 I_d\quad B = \epsilon^2 I_d$ and (ii) $A = \rho I_d - W W^\top,\quad B = \gamma I_d - W W^\top$. The goal is to characterize how close the embedding obtained by the SSL objective (Proposition~\ref{theo:ortho}) and the PCA objective (Proposition~\ref{prop:PCA}) is to the true latent structure. Therefore we define $\calL_{SSL}:=\norm{z-\hat{z}_{SSL}}$ where $\hat{z}_{SSL}$ is computed using $\hat{W}$ as defined in Proposition~\ref{theo:ortho}. Analogously $\calL_{PCA}$ is computed using the embedding following Proposition~\ref{prop:PCA}.

In Figure~\ref{fig:enter-label} we show the difference between both models, in both data settings. We can observe in Figure~\ref{fig:enter-label} (left) that for the orthogonal noise model, the SSL model outperforms PCA, for all $\rho, \gamma$. Interestingly, the difference is highest, when $\rho,\gamma$ are close to one and decreases with an increase in both parameters. Furthermore, Figure~\ref{fig:enter-label} (right) shows that for isotropic noise, the difference between both models is around zero, in line with the theoretical results stated in Proposition~\ref{prop:PCA} \& \ref{theo:ortho}.

\begin{figure}[t]
    \centering
    \includegraphics[width=\linewidth]{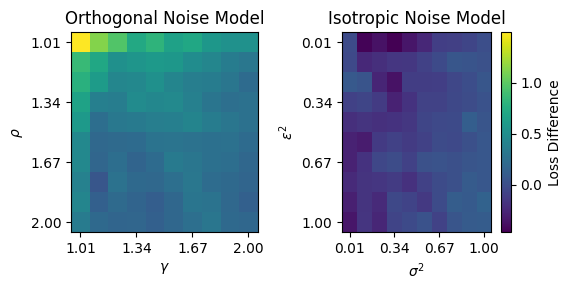}
    \caption{Numerical analysis of the theoretical setting.  Plotted is $\calL_{PCA}-\calL_{SSL}$ (therefore positive values imply the SSL model recovers the true embedding better then PCA), the averaged over $1000$ initializations.
    \textbf{(left)} orthogonal noise model for varying $\rho,\gamma\in(1.01,2)$. 
    \textbf{(right)} isotropic nose model for varying $\sigma^2,\epsilon^2\in(0.01,1)$.
    }
    \label{fig:enter-label}
\end{figure}

\begin{figure}[t]
    \centering
    \includegraphics[width=\linewidth]{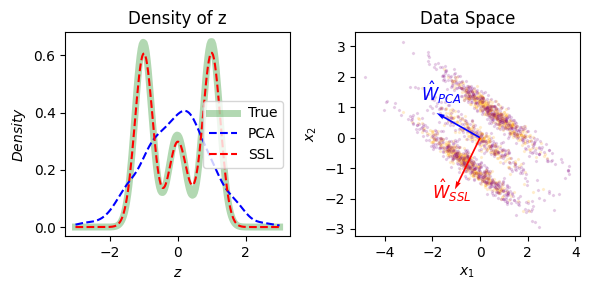}
    \caption{Orthogonal noise model with GMM latent space. \textbf{(left)} plotted is the density (under a Gaussian kernel estimate) of the true latent space in green and the representation by PCA in blue and SSL (according to Proposition~\ref{theo:ortho}) in red.
    \textbf{(right)} Plotted are the data (in purple) and positive samples (in orange) in $\bbR^2$.
    }
    \label{fig:GMM}
\end{figure}

\subsection{Gaussian Mixture Model}

Our theoretical analysis is built on the assumption of Gaussian latent variables. We now extend the model presented in \eqref{eq:model_def} by considering a more complex latent structure, namely a mixture of $K$ Gaussian. Define:
\begin{align*}
\begin{split}
    &z\sim \sum_i^K\pi_i\calN(\mu_i,\Sigma_i) \quad\st\quad\sum_i^K\pi_i =1 \\
    &x|z \sim \mathcal{N}(Wz,A) \\
    &x_+ | x \sim \mathcal{N}(x,B)
\end{split}
\end{align*}
where $\mu_i,\Sigma_i$ are the mean and covariance of component $i$. Figure~\ref{fig:GMM} illustrates this for three mixture components\footnote{Specifically for the data we consider the following hyper-parameters: $A = \rho I_d - W W^\top,\quad B = \gamma I_d - W W^\top,\quad\rho, \gamma >1$ as well as $k=1,d=2,n=1000,\pi_1=\pi_2=0.4,\pi_3=0.2,\rho= \gamma=1.01$}. Figure~\ref{fig:GMM} (right) shows the data and augmentations, as well as the weights obtained by PCA and SSL.
In addition, in Figure~\ref{fig:GMM} (left), we observe that the embedding obtained by SSL aligns with the true latent distribution, whereas PCA does not learn the direction of modes. This example highlights that PCA just projects the data along the direction of maximum variance, which is orthogonal to the true signal direction. \textit{In comparison, SSL manages to take advantage of the signal direction encoded in the positive sample to project the data into the correct direction.}

\section{Related Work}

Before concluding, we discuss some additional related works regarding the theoretical analysis of SSL and generative models for it.

\emph{Theoretical focus of SSL analysis.} While there are theoretical works addressing  the importance of data augmentation \citep{wen2021toward, Zhuo0M023} in SSL, and  characterizing causes for its ``dimension collapse'' \citep{pokle2022contrasting,esser2023representation}, the main focus of the theoretical literature on SSL has been on providing generalization error bounds for downstream tasks\citep{Arora2019ATA,WeiXM21}.

\emph{Analysis of SSL models under generative data settings.} Several works that consider a ``Bayesian SSL'' setup mainly focus on assuming a prior in the learned models, not a prior on the data model \citep{liu2023bayesian,vahidi2023probabilistic}.
Closest to our work is \citep{bizeul2024probabilistic} that considers generative functions that impose cluster structures. By computing Evidence Lower Bounds they connect it to existing loss functions. \citet{zimmermann2021contrastive} assume data is generated by an unknown injective generative model that maps latent variables from a hypersphere to observations on a manifold. This model is then connected to optimizing InfoNCE \citep{oord2018representation}.
Importantly, while in both approaches a generative model is assumed, there is no explicit connection to the creation of positive samples, or what constitutes a meaningful augmentation, which is the focus of this work. Therefore, \citep{zimmermann2021contrastive,oord2018representation} does not allow for a direct analysis of data dependent cases in which SSL is beneficial over other standard models (such as PCA).

\section{Discussion and Open Questions}

While the model we consider is simple, it formalizes the idea that SSL does not provide additional insights over PCA unless the augmentation carries specific signal information that would otherwise vanish due to uninformative variance.
Of course, assuming that the positive pairs $(x,x^+)$ lie exactly along a subspace orthogonal to the signal direction is a strong assumption that cannot be expected to be true in practice. However, it demonstrates that the ideal function space within which the simple non-contrastive loss \eqref{eq:simple_loss} should be minimized is the one in which this condition (approximately) holds. This immediately opens up two questions.

\textbf{Practical used augmentations.} For commonly used augmentations (cropping, rotation, blurring) on image classification tasks, which feature transformation $\phi$ is the one that leads to an orthogonal noise model justifying the effectiveness of SSL? Is it true for popular machine learning kernels? 

\textbf{Influence of optimization.} Is there something in the implicit bias of gradient-based deep learning that leads to such ``good'' feature transformations $\phi$ being learned under commonly used SSL loss functions?

\bibliography{main}

\begin{thebibliography}{22}
\providecommand{\natexlab}[1]{#1}
\providecommand{\url}[1]{\texttt{#1}}
\expandafter\ifx\csname urlstyle\endcsname\relax
  \providecommand{\doi}[1]{doi: #1}\else
  \providecommand{\doi}{doi: \begingroup \urlstyle{rm}\Url}\fi

\bibitem[Bromley et~al.(1993)Bromley, Guyon, LeCun, S{\"a}ckinger, and Shah]{bromley1993signature}
Jane Bromley, Isabelle Guyon, Yann LeCun, Eduard S{\"a}ckinger, and Roopak Shah.
\newblock Signature verification using a" siamese" time delay neural network.
\newblock \emph{Advances in neural information processing systems}, 1993.

\bibitem[Misra and Maaten(2020)]{misra2020self}
Ishan Misra and Laurens van~der Maaten.
\newblock Self-supervised learning of pretext-invariant representations.
\newblock In \emph{Proceedings of the IEEE/CVF Conference on Computer Vision and Pattern Recognition}, 2020.

\bibitem[Fernando et~al.(2017)Fernando, Bilen, Gavves, and Gould]{fernando2017self}
Basura Fernando, Hakan Bilen, Efstratios Gavves, and Stephen Gould.
\newblock Self-supervised video representation learning with odd-one-out networks.
\newblock In \emph{IEEE conference on computer vision and pattern recognition}, 2017.

\bibitem[Chen et~al.(2020)Chen, Kornblith, Norouzi, and Hinton]{chen2020simple}
Ting Chen, Simon Kornblith, Mohammad Norouzi, and Geoffrey Hinton.
\newblock A simple framework for contrastive learning of visual representations.
\newblock In \emph{International conference on machine learning}. PMLR, 2020.

\bibitem[Steffen et~al.(2019)Steffen, Baevski, Collobert, and Auli]{Steffen2019arxiv}
Steffen Steffen, Alexei Baevski, Ronan Collobert, and Michael Auli.
\newblock wav2vec: Unsupervised pre-training for speech recognition, 2019.

\bibitem[Bengio et~al.(2013)Bengio, Courville, and Vincent]{bengio2013representation}
Yoshua Bengio, Aaron Courville, and Pascal Vincent.
\newblock Representation learning: A review and new perspectives.
\newblock \emph{IEEE transactions on pattern analysis and machine intelligence}, 2013.

\bibitem[Bishop(2006)]{Bishop}
Christopher~M. Bishop.
\newblock \emph{Pattern Recognition and Machine Learning (Information Science and Statistics)}.
\newblock Springer-Verlag, 2006.

\bibitem[Tipping and Bishop(1999)]{tipping1999probabilistic}
Michael~E Tipping and Christopher~M Bishop.
\newblock Probabilistic principal component analysis.
\newblock \emph{Journal of the Royal Statistical Society Series B: Statistical Methodology}, 61\penalty0 (3):\penalty0 611--622, 1999.

\bibitem[Lawrence and Hyv{\"a}rinen(2005)]{lawrence2005probabilistic}
Neil Lawrence and Aapo Hyv{\"a}rinen.
\newblock Probabilistic non-linear principal component analysis with gaussian process latent variable models.
\newblock \emph{Journal of machine learning research}, 6\penalty0 (11), 2005.

\bibitem[Zbontar et~al.(2021)Zbontar, Jing, Misra, LeCun, and Deny]{zbontar2021barlow}
Jure Zbontar, Li~Jing, Ishan Misra, Yann LeCun, and St{\'e}phane Deny.
\newblock Barlow twins: Self-supervised learning via redundancy reduction.
\newblock In \emph{International conference on machine learning}. PMLR, 2021.

\bibitem[Bardes et~al.(2021)Bardes, Ponce, and LeCun]{bardes2021vicreg}
Adrien Bardes, Jean Ponce, and Yann LeCun.
\newblock Vicreg: Variance-invariance-covariance regularization for self-supervised learning.
\newblock \emph{arXiv preprint arXiv:2105.04906}, 2021.

\bibitem[Wen and Li(2021)]{wen2021toward}
Zixin Wen and Yuanzhi Li.
\newblock Toward understanding the feature learning process of self-supervised contrastive learning.
\newblock In \emph{International Conference on Machine Learning}. PMLR, 2021.

\bibitem[Zhuo et~al.(2023)Zhuo, Wang, Ma, and Wang]{Zhuo0M023}
Zhijian Zhuo, Yifei Wang, Jinwen Ma, and Yisen Wang.
\newblock Towards a unified theoretical understanding of non-contrastive learning via rank differential mechanism.
\newblock In \emph{The Eleventh International Conference on Learning Representations}, 2023.

\bibitem[Pokle et~al.(2022)Pokle, Tian, Li, and Risteski]{pokle2022contrasting}
Ashwini Pokle, Jinjin Tian, Yuchen Li, and Andrej Risteski.
\newblock Contrasting the landscape of contrastive and non-contrastive learning.
\newblock \emph{arXiv preprint arXiv:2203.15702}, 2022.

\bibitem[Esser et~al.(2023)Esser, Mukherjee, and Ghoshdastidar]{esser2023representation}
Pascal Esser, Satyaki Mukherjee, and Debarghya Ghoshdastidar.
\newblock Representation learning dynamics of self-supervised models.
\newblock \emph{Transactions on Machine Learning Research}, 2023.

\bibitem[Arora et~al.(2019)Arora, Khandeparkar, Khodak, Plevrakis, and Saunshi]{Arora2019ATA}
Sanjeev Arora, Hrishikesh Khandeparkar, Mikhail Khodak, Orestis Plevrakis, and Nikunj Saunshi.
\newblock A theoretical analysis of contrastive unsupervised representation learning.
\newblock In \emph{International Conference on Machine Learning}, 2019.

\bibitem[Wei et~al.(2021)Wei, Xie, and Ma]{WeiXM21}
Colin Wei, Sang~Michael Xie, and Tengyu Ma.
\newblock Why do pretrained language models help in downstream tasks? an analysis of head and prompt tuning.
\newblock In \emph{Advances in Neural Information Processing Systems}, 2021.

\bibitem[Liu and Wang(2023)]{liu2023bayesian}
Bin Liu and Bang Wang.
\newblock Bayesian self-supervised contrastive learning.
\newblock \emph{arXiv preprint arXiv:2301.11673}, 2023.

\bibitem[Vahidi et~al.(2023)Vahidi, Scho{\ss}er, Wimmer, Li, Bischl, H{\"u}llermeier, and Rezaei]{vahidi2023probabilistic}
Amirhossein Vahidi, Simon Scho{\ss}er, Lisa Wimmer, Yawei Li, Bernd Bischl, Eyke H{\"u}llermeier, and Mina Rezaei.
\newblock Probabilistic self-supervised learning via scoring rules minimization.
\newblock \emph{arXiv preprint arXiv:2309.02048}, 2023.

\bibitem[Bizeul et~al.(2024)Bizeul, Sch{\"o}lkopf, and Allen]{bizeul2024probabilistic}
Alice Bizeul, Bernhard Sch{\"o}lkopf, and Carl Allen.
\newblock A probabilistic model to explain self-supervised representation learning.
\newblock \emph{arXiv preprint arXiv:2402.01399}, 2024.

\bibitem[Zimmermann et~al.(2021)Zimmermann, Sharma, Schneider, Bethge, and Brendel]{zimmermann2021contrastive}
Roland~S Zimmermann, Yash Sharma, Steffen Schneider, Matthias Bethge, and Wieland Brendel.
\newblock Contrastive learning inverts the data generating process.
\newblock In \emph{International Conference on Machine Learning}. PMLR, 2021.

\bibitem[Oord et~al.(2018)Oord, Li, and Vinyals]{oord2018representation}
Aaron van~den Oord, Yazhe Li, and Oriol Vinyals.
\newblock Representation learning with contrastive predictive coding.
\newblock In \emph{Proceedings of the 32nd International Conference on Neural Information Processing Systems}, 2018.

\end{thebibliography}
\clearpage

\end{document}